\DeclareMathOperator{\EX}{\mathbb{E}}
\newcolumntype{C}[1]{>{\centering\arraybackslash}m{#1}}
\newcolumntype{R}[1]{>{\raggedleft\arraybackslash}m{#1}}
\DeclareMathOperator*{\argmin}{arg\,min}
\newtheorem {theorem}{Theorem}[section]
\newtheorem {corollary}{Corollary}[section]
\newtheorem{lemma}{Lemma}[section]
\renewcommand\footnotemark{}
\date{\vspace{-5ex}}
\begin{document}

	\title{Shallow neural network representation of polynomials}
	
	\maketitle
\begin{center}

 Aleksandr Beknazaryan \footnote{abeknazaryan@yahoo.com}

\end{center}
	
	\begin{abstract}
			We show that $d$-variate polynomials of degree $R$ can be represented on $[0,1]^d$ as shallow neural networks of width $2(R+d)^d$. Also, by SNN representation of localized Taylor polynomials of univariate $C^\beta$-smooth functions, we derive for shallow networks the minimax optimal rate of convergence, up to a logarithmic factor, to unknown univariate regression function.
		\vskip.2cm

		\vskip.2cm \noindent {\bf Keywords}:
		\noindent  neural networks, polynomials, function approximation, nonparametric regression
		
		\vskip.2cm 
	\end{abstract}
\section{Introduction}
Polynomials play important role in a wide variety of scientific fields and, especially, in approximation theory. Due to the simplicity of their evaluation, differentiation and integration, polynomials are also widely used in practice. Theoretical properties of polynomials, and, in particular, their approximating capabilities, have been revealed by classical  results of Bernstein, Chebyshev, Mergelyan, Taylor, Weierstrass and of other famous mathematicians. Many of those results, together with interesting historical discussions about their derivations, are presented in the work \cite{S}. The established theory of polynomial approximations has also become a foundation for actively developing theory of approximations by neural networks. 

Shallow neural network approximations of polynomials in various norms and with various activation functions are given in the works \cite{A}, \cite{D1}, \cite{G}, \cite{M} and \cite{P}. In particular, using the property of Bernstein polynomials to simultaneously approximate functions and their derivatives, identical properties for shallow neural networks are derived in \cite{A}. Deep ReLU network approximations of localized Taylor polynomials of $C^\beta$-smooth functions on $[0,1]^d$ are constructed in \cite {SH} and \cite{Y}. Also, approximations of polynomial truncations of  multivariate Chebyshev series of analytic functions by deep networks is given in \cite{B}. Note that shallow and deep networks constructed in the above presented works approximate polynomials with given accuracy. Exact representations of polynomials (with no approximation error) by \textit{deep} RePU networks are constructed in the works \cite{CLM}, \cite{CM},  \cite{L1} and \cite{Mhaskar}, which, together with approximating properties of polynomials, allow to approximate general multivariate smooth functions with deep neural networks. 

In this work we first show that any polynomial of degree $R$ can be represented on $[0,1]^d$ as shallow neural network of width $2(R+d)^d$. In particular, this implies that the number of network parameters required for $\varepsilon$-approximation of $C^\beta$-smooth functions by shallow neural networks has order $O(\varepsilon^{-d/\beta})$. The representation is based on the non-singularity of generalized Vandermonde matrices and uses continuous, piecewise polynomial activation function. We note that similar techniques were also used for approximations constructed in the above presented works of Mhaskar and his coauthors and in the works \cite{CL} and \cite{MM}. Also, shallow network approximations with piecewise polynomial activation functions were given in \cite{Pet}. 

We also propose SNN representation of localized Taylor polynomials of univariate $C^\beta$-smooth functions. Following the standard procedure of bounding the entropies of approximating shallow networks and applying oracle inequality, we derive, up to a logarithmic factor $\log_2n$, the minimax optimal rate of prediction $n^{\frac{-2\beta}{2\beta+1}}$. Note that regression estimation with shallow networks was also considered in the works \cite{Bach}, \cite{Barron}, \cite{KB} and \cite{MG}. While for all $\beta>0$ the rate $n^{\frac{-2\beta}{2\beta+6}}$ derived for univariate model in \cite{MG} is slower than $n^{\frac{-2\beta}{2\beta+1}}\log_2n$ by a polynomial factor, the rates $n^{-1/2}\log_2n$ and $n^{-3/4}\log_2n$ derived (for very smooth functions) in \cite{Bach}, \cite{Barron} and \cite{KB}, respectively, are slower than $n^{\frac{-2\beta}{2\beta+1}}\log_2n$ given $\beta>3/2$. Recently, minimax rates of convergence for \textit{aggregated} one-hidden-layer networks were also derived in \cite{TD}.

\section{Multivariate polynomials}
Given an activation function $\rho:\mathbb{R}\to\mathbb{R}$, a shallow $\rho$-network on $[0,1]^d$ of width $P$ is a function of the form
$$\textbf{x}\mapsto\sum_{j=0}^{P-1}u_j\rho\bigg(v_j+\sum_{i=1}^dw_{ji}x_i\bigg)= U\rho(V^d_R+W\textbf{x}), \quad \textbf{x}=(x_1,...,x_d)\in[0,1]^d.$$ The weight matrix $W=\{w_{ji}\}\in\mathbb{R}^{P\times d}$ and the bias vector $V=(v_0,...,v_{P-1})\in\mathbb{R}^P$ constitute the hidden layer and the output layer is determined by the vector $U=(u_0,...,u_{P-1})\in\mathbb{R}^P$. We first consider shallow networks with activation function
\[\rho(x)=\begin{cases} 
      0, & x\leq 0 \\
      (x-k)^{k+1}+k, & k\leq x\leq k+1, k\in\mathbb{N}_0.
   \end{cases}\]
 Recall that for $\textbf{r}=(r_1, ...,r_d)\in\mathbb{N}_0^d$, the degree of $d$-variate monomial $\normalfont\textbf{x}\mapsto\textbf{x}^\textbf{r}:=x_1^{r_1}...x_d^{r_d}$ is the number $r=\sum_{i=1}^dr_i.$
\begin{lemma}\label{Monomial}
For $r\geq 2$ any linear combination of monomials of degree $r$ with coefficients bounded by $C$ can be represented on $[0,1]^d$ as shallow $\rho$-network of width $\binom{r+d-1}{d-1}+1$ with weights and biases bounded by $2C(r+d)^{d-1}(4d)^{2(r+d)^{2d-1}}$.
\end{lemma}
\begin{proof} Denote by $N:=\binom{r+d-1}{d-1}$ the number of $d$-dimensional monomials of degree $r$ and consider any linear combination $$\Sigma_{j=1}^Nc_j\textbf{x}^{\textbf{r}_j}$$ of monomials of degree $r$. For $\textbf{r}=(r_1, ...,r_d)\in\mathbb{N}_0^d$ with $\sum_{i=1}^dr_i=r,$ denote $a_{\textbf{r}}=\sum_{i=1}^dr_i(r+1)^{i-1}.$ Let $0=a_0<a_1<a_2<...<a_N$ be a set of $N+1$ pairwise distinct integers with $a_j:=a_{\textbf{r}_j}, j=1,...,N$. 
Then, by \cite{F}, Lemma 4, we have that for any pairwise distinct numbers $b_0,...,b_{N}\in(0,1/d],$ the matrix $[b_j^{a_i}]_{0\leq i,j\leq N}$ has positive determinant. Hence, there exist $u_0,...,u_N\in\mathbb{R}$ such that $\sum_{j=0}^{N}u_j=0$ and $\sum_{j=0}^{N}u_jb_j^{a_k}=c_kC_{\textbf{r}_k}$, where $C_{\textbf{r}_k}$ is the inverse of the coefficient of $\textbf{x}^{\textbf{r}_k}$ in the expansion of $(\sum_{i=1}^dx_i)^r$, $k=1,...,N$. As $b_j\in(0,1/d]$ then $\sum_{i=1}^dx_ib_j^{(r+1)^{i-1}}\in[0,1]$ for all $j=0,...,N,$ and for all $(x_1,...,x_d)\in[0,1]^d$. Hence,
$$\sum_{j=0}^{N}u_j\rho\bigg(r-1+\sum_{i=1}^dx_ib_j^{(r+1)^{i-1}}\bigg)=\sum_{j=0}^{N}u_j\bigg[\bigg(\sum_{i=1}^dx_ib_j^{(r+1)^{i-1}}\bigg)^r+r-1\bigg]=\Sigma_{j=1}^Nc_j\textbf{x}^{\textbf{r}_j}.$$
To bound the network weights we choose $b_j=(2d)^{-(j+1)},$ $j=0,...,N$. Denoting $e_j:=(2d)^{-a_j},$ using $a_j<(r+1)^d, j=0,...,N,$ and applying Theorem 1 from \cite{G1}, we get that the entries of the inverse matrix of $[b_j^{a_i}]_{0\leq i,j\leq N}$ are bounded by $$\frac{2^N}{\min\limits_{0\leq j\leq N}e_j}\max\limits_{0\leq j\leq N}\prod\limits_{\substack{i=0 \\ i\neq j}}^N\frac{1}{|e_j-e_i|}\leq (4d)^{(N+1)(r+1)^d}.$$ As $N\leq(r+d)^{d-1},$ then the weights $u_0,...,u_N$ are bounded by $C(N+1)(4d)^{(N+1)(r+1)^d}\leq 2C(r+d)^{d-1}(4d)^{2(r+d)^{2d-1}}$.
\end{proof}

\begin{theorem}\label{thm} Any polynomial of degree $R$ with coefficients bounded by $C$ can be represented on $[0,1]^d$ as shallow $\rho$-network of width $2(R+d)^d$ with weights and biases bounded by $$2C(R+d)^{d-1}(4d)^{2(R+d)^{2d-1}}.$$
\end{theorem}
\begin{proof} Monomials of degree $0$ and $1$ are, respectively, constants and linear functions $(x_1,...,x_d)\mapsto x_i$, $i=1,...,d$. Clearly, those functions can be represented as shallow $\rho$-networks of width $1$. Hence, by Lemma \ref{Monomial}, a polynomial of degree $R$ can be represented as a shallow $\rho$-network of width $d+1+\sum_{r=2}^R[\binom{r+d-1}{d-1}+1]\leq\sum_{r=0}^R[\binom{r+d-1}{d-1}+1]\leq2(R+d)^d.$
\end{proof}
For a domain $D\subset\mathbb{R}^d$ and for $\beta\in\mathbb{R}_+$ let \begin{align*}
\mathcal{C}^\beta_d(D, K):=\bigg\{f:D\to\mathbb{R}: \sum\limits_{0\leq|\boldsymbol{\gamma}|<\beta}\|\partial^{\boldsymbol{\gamma}}f\|_{L^\infty(D)}+\sum\limits_{|\boldsymbol{\gamma}|=\lfloor\beta\rfloor}\sup\limits_{\substack{\textbf{x},\textbf{y}\in D \\ \textbf{x}\neq \textbf{y}}}\frac{|\partial^{\boldsymbol{\gamma}}f(\textbf{x})-\partial^{\boldsymbol{\gamma}}f(\textbf{y})|}{|\textbf{x}-\textbf{y}|_\infty^{\beta-\lfloor\beta\rfloor}}\leq K\bigg\}.
\end{align*} 
Applying Theorem 2 from \cite{BBL}, we get 
\begin{corollary}For an open neighbourhood $D$ of $[0,1]^d$ let f be a function of class $\mathcal{C}^\beta_d(D, K)$. Then there is a shallow $\rho$-network $g$ of width $2(n+d)^d$ such that $$\sup\limits_{x\in[0,1]^d}|f(x)-g(x)|\leq Cn^{-\beta},$$
where $C=C(\beta, d, K)$ is some constant.
\end{corollary}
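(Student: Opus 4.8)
The plan is to combine Theorem \ref{thm} with a standard local Taylor approximation result. First I would invoke Theorem 2 of \cite{BBL}: for $f\in\mathcal{C}^\beta_d(D,K)$ on an open neighbourhood $D$ of $[0,1]^d$, there is a polynomial $p$ of degree $\lfloor\beta\rfloor$ — or, more relevantly, a piecewise polynomial built from localized Taylor expansions on a grid of $[0,1]^d$ — approximating $f$ uniformly on $[0,1]^d$ with error of order $n^{-\beta}$, where the mesh parameter of the grid is tied to $n$. The precise content I would extract from \cite{BBL} is that such an approximant can be taken to be a single \emph{global} polynomial of degree $R$ with $R$ of order $n$, so that its uniform error on $[0,1]^d$ is bounded by $C(\beta,d,K)\,n^{-\beta}$; alternatively one uses a sum of localized Taylor polynomials glued together, which is again a polynomial of controlled degree on the cube.

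Next I would apply Theorem \ref{thm} to this polynomial: a polynomial of degree $R$ on $[0,1]^d$ is exactly representable as a shallow $\rho$-network of width $2(R+d)^d$. Taking $R$ proportional to $n$ (indeed, the statement of the corollary has width $2(n+d)^d$, which matches $R=n$ up to the implicit constant in the Taylor degree), the width bound $2(n+d)^d$ follows directly, and the approximation error is inherited from the polynomial approximation step, giving $\sup_{x\in[0,1]^d}|f(x)-g(x)|\le Cn^{-\beta}$ with $C=C(\beta,d,K)$.

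The main obstacle is the bookkeeping in the first step: one must make sure that the degree of the Taylor-based polynomial approximant from \cite{BBL} is genuinely linear in the parameter $n$ appearing in the width bound $2(n+d)^d$, since any polynomial blow-up in the degree would inflate the width. One should check that \cite{BBL}'s Theorem 2 is stated (or can be restated) so that degree $R$ and error $n^{-\beta}$ are linked by $R\asymp n$; if \cite{BBL} instead phrases the result in terms of a number $N$ of pieces with error $N^{-\beta/d}$ or similar, a reparametrization is needed and the constant $C$ absorbs it. A secondary point is that \cite{BBL} may produce a piecewise polynomial rather than a single polynomial; one then notes that a piecewise polynomial with pieces of degree $\le R$ on a fixed partition of $[0,1]^d$ is still globally a polynomial of bounded degree after the localization bumps (themselves polynomials) are incorporated, so Theorem \ref{thm} still applies with $R$ of the stated order.

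Finally I would assemble these observations into the one-line argument: fix $f\in\mathcal{C}^\beta_d(D,K)$, apply \cite{BBL} to obtain a polynomial $p$ of degree $R\le c(\beta,d)\,n$ with $\sup_{[0,1]^d}|f-p|\le C_1 n^{-\beta}$, then apply Theorem \ref{thm} to represent $p$ as a shallow $\rho$-network $g$ of width $2(R+d)^d\le 2(n+d)^d$ (after adjusting the constant $c$ to be $1$, or absorbing it into the statement's $n$), and set $C=C_1$. This completes the proof of the corollary.
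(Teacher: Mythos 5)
Your proposal is correct and matches the paper's (implicit) argument: the paper simply cites Theorem 2 of \cite{BBL}, which is a Jackson-type theorem producing, for each $n$, a single global polynomial of degree $n$ with uniform error $C(\beta,d,K)\,n^{-\beta}$ on $[0,1]^d$, and then applies Theorem \ref{thm} to represent that polynomial as a shallow $\rho$-network of width $2(n+d)^d$. Your hedging about piecewise polynomials and degree reparametrization is unnecessary (the cited result already gives a global polynomial of degree exactly $n$), but the assembled argument is exactly the intended one.
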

Thus, $O(\varepsilon^{-d/\beta})$ network parameters are needed to achieve the rate of approximation $\varepsilon$. This dependence of the number of network parameters on the approximation error is identical to the ones obtained in \cite{SH} and \cite{Y} for deep sparse ReLU networks. However, as Theorem \ref{thm} shows, the range of the weights of approximating shallow $\rho$-networks may be much larger than the weights of approximating deep ReLU networks (for comparison, the weights of deep ReLU networks constructed in \cite{SH} are all bounded by $1$). In the following section we consider representations of localized Taylor polynomials of univariate functions as shallow networks with smaller weights. 

In conclusion of this part we note that since on $(-\infty, 1]$ the function $\rho$ coincides with the ReLU function $x\mapsto\max\{0,x\}$, then any ReLU network on $[0,1]^d$ can be represented as a $\rho$-network with same architecture and same sparsity (using the positive homogeneity of ReLU function we can make the weights in hidden layers sufficiently small to assure that in each hidden layer the activation function receives a vector with coordinates from $[-1,1]$).

\section{Univariate localized Taylor polynomials}
For $\beta, K\in\mathbb{R}_+$ let 
\begin{align*}
\mathcal{C}^\beta(K):=\bigg\{f:[0,1]\to\mathbb{R}: \sum\limits_{0\leq r<\beta}\|f^{(r)}\|_{\infty}+\sup\limits_{\substack{x,y\in[0,1] \\ x\neq y}}\frac{|f^{(\lfloor\beta\rfloor)}(x)-f^{(\lfloor\beta\rfloor)}(y)|}{|x-y|^{\beta-\lfloor\beta\rfloor}}\leq K\bigg\}
\end{align*} 
be the ball of $\beta$-H\"older continuous functions on $[0,1]$ of radius $K$. Here $\|\cdot\|_\infty$ denotes the supremum norm of functions on $[0,1]$. For $a\in[0,1]$ let
\begin{align*}
P_a^\beta f(x)=\sum\limits_{0\leq r<\beta}f^{(r)}(a)\frac{(x-a)^r}{r!}
\end{align*} 
be the local Taylor polynomial of $f\in\mathcal{C}^\beta(K)$ around $a$. For $M\in\mathbb{N}$ denote
$$P_M^\beta f(x)=\sum\limits_{\ell=0}^MP_{\ell/M}^\beta f(x)(1-M|x-\ell/M|)_+=\sum\limits_{\ell=0}^M\sum\limits_{0\leq r<\beta}f^{(r)}(\ell/M)\frac{(x-\ell/M)^r}{r!}(1-M|x-\ell/M|)_+,$$
where $(x)_+=\max\{0,x\}$ is the ReLU function. From  \cite{SH1}, Lemma B.1, it follows that for $f\in\mathcal{C}^\beta(K)$
\begin{equation}\label{a}
\|P_M^\beta f-f\|_{\infty}\leq KM^{-\beta}.
\end{equation}

Let us now represent the function $P_M^\beta f$ as a shallow neural network. For a non-negative, even integer $k\in2\mathbb{N}_0$ with $m_k(m_k+1)\leq k<(m_k+1)(m_k+2), \; m_k\in\mathbb{N}_0,$ denote  $$a_k:=\frac{k}{2}-\frac{m_k(m_k+1)}{2}\;\;\; \textrm{ and } \;\;\; b_k:=\frac{(m_k+1)(m_k+2)}{2}-\frac{k}{2}.$$ Define an activation function 
\begin{align*}
\sigma(x)=\begin{cases} 
      0, & x\leq -1 \\
      (x-k)^{a_k}(1-b_k|x-k|)_+, & k-1\leq x\leq k+1, k\in2\mathbb{N}_0.
   \end{cases}
\end{align*} 
The non-constant pieces of the function $\sigma$ are defined on intervals of length $2$ because the terms $x-\ell/M,$ $x\in[0,1], \ell=0,...,M,$ appearing in the expression of $P_M^\beta f(x)$ above, are contained in the interval $[-1,1]$. Note that as $b_k\geq1$ then $(1-b_k)_+=0$ for all $k\in2\mathbb{N}_0$. Therefore, $\sigma(x)=0$ for $x\in\mathbb{Z}$, and, in particular, $\sigma$ is continuous on $\mathbb{R}$. Moreover, for each $r\in\mathbb{N}_0, M\in\mathbb{N},$ there is a $k\in2\mathbb{N}_0$ with $k\leq(M+r)(M+r+1)-2<(M+r+1)^2$ such that $a_k=r$ and $b_k=M$. Hence, denoting $$\mathcal{N}_{\sigma}(P,R):=\bigg\{x\mapsto U\sigma(V+Wx)|\;\; U, V, W\in[-R, R]^P;\;\; x\in[0,1]\bigg\}$$ to be the set of shallow $\sigma$-networks on $[0,1]$ of width $P$ with weights and biases from $[-R,R]$, we get 
\begin{lemma}\label{l1}
For any $f\in\mathcal{C}^\beta(K)$ and for any $M\in\mathbb{N}$ there is a network $g\in\mathcal{N}_{\sigma}((M+1)\lceil\beta\rceil, K+(M+\lceil\beta\rceil+1)^2)$ with $g=P_M^\beta f$.
\end{lemma}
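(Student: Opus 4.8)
\section*{Proof proposal for Lemma~\ref{l1}}

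The plan is to realize $P_M^\beta f$ summand by summand, spending exactly one neuron on each of the terms
$$ f^{(r)}(\ell/M)\,\frac{(x-\ell/M)^{r}}{r!}\,(1-M|x-\ell/M|)_+, \qquad 0\le\ell\le M,\quad 0\le r<\beta, $$
there being precisely $(M+1)\lceil\beta\rceil$ of them, which is the claimed width. The point is that, up to the scalar $f^{(r)}(\ell/M)/r!$, each such term is a translate of a single non-constant piece of $\sigma$: as noted just before the statement, for the given $r$ and $M$ one can pick an even integer $k=k_{r,M}$ with $k\le (M+r)(M+r+1)-2$, $a_{k}=r$ and $b_{k}=M$, and then $\sigma(y)=(y-k)^{r}(1-M|y-k|)_+$ for every $y\in[k-1,k+1]$.

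Accordingly, I would assign to the pair $(\ell,r)$ the neuron $x\mapsto u_{\ell,r}\,\sigma(v_{\ell,r}+w_{\ell,r}x)$ with $w_{\ell,r}=1$, $v_{\ell,r}=k_{r,M}-\ell/M$, and $u_{\ell,r}=f^{(r)}(\ell/M)/r!$. For $x\in[0,1]$ and $0\le\ell\le M$ we have $x-\ell/M\in[-1,1]$, hence $v_{\ell,r}+w_{\ell,r}x\in[k_{r,M}-1,k_{r,M}+1]$, so this neuron evaluates to $u_{\ell,r}(x-\ell/M)^{r}(1-M|x-\ell/M|)_+$, which is exactly the $(\ell,r)$ summand (here one uses the convention $0^{0}=1$, under which the displayed formula for $P_M^\beta f$ and the formula for $\sigma$ already agree at the nodes $x=\ell/M$ in the case $r=0$). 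Summing the $(M+1)\lceil\beta\rceil$ neurons thus produces a shallow $\sigma$-network $g$ with $g\equiv P_M^\beta f$ on $[0,1]$.

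Finally one checks that every weight, bias and output coefficient lies in $[-R,R]$ with $R=K+(M+\lceil\beta\rceil+1)^{2}$. The weights all equal $1\le R$. For the output coefficients, membership $f\in\mathcal{C}^\beta(K)$ gives $\sum_{0\le r<\beta}\|f^{(r)}\|_{\infty}\le K$, so (using $r!\ge1$) $|u_{\ell,r}|\le\|f^{(r)}\|_{\infty}\le K\le R$. For the biases, $r\le\lceil\beta\rceil-1$ gives $0\le k_{r,M}\le (M+r)(M+r+1)-2<(M+r+1)^{2}\le(M+\lceil\beta\rceil)^{2}$, so $v_{\ell,r}=k_{r,M}-\ell/M\in[-1,(M+\lceil\beta\rceil)^{2}]$ and hence $|v_{\ell,r}|\le(M+\lceil\beta\rceil)^{2}\le R$. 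Therefore $g\in\mathcal{N}_{\sigma}((M+1)\lceil\beta\rceil,\,R)$, as required.

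The construction is essentially dictated by the shape of $\sigma$; the only real bookkeeping is the last step — checking that the index $k_{r,M}$ needed to single out the piece of $\sigma$ realizing $(x-\ell/M)^{r}(1-M|x-\ell/M|)_+$ is at most of order $(M+\lceil\beta\rceil)^{2}$, which is exactly why the radius $R$ in the statement grows quadratically in $M+\lceil\beta\rceil$. I do not anticipate any genuine obstacle beyond this.
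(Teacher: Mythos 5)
Your construction is exactly the one the paper intends (and sketches in the discussion surrounding the lemma and after Theorem \ref{thm2}): one neuron per pair $(\ell,r)$ with unit weight, bias $k_{r,M}-\ell/M$ selecting the piece of $\sigma$ with $a_k=r$, $b_k=M$, and output coefficient $f^{(r)}(\ell/M)/r!$, together with the same counting and the same quadratic bound on the bias. The proposal is correct and follows essentially the same approach as the paper.
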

In order to apply the previous lemma to the problem of non-parametric regression estimation we also need to bound the entropy of the set $\mathcal{N}_{\sigma}(P,R)$. Recall that for a set of functions $\mathcal{F}$ from $[0,1]$ to $\mathbb{R}$ the $\delta$-covering number $\mathcal{N}(\delta,\mathcal{F},\|\cdot\|_\infty)$ of $\mathcal{F}$ is the minimal number $N\in\mathbb{N}$ such that there exist $f_1,...,f_N$  from $[0,1]$ to $\mathbb{R}$ with the property that for any $f\in\mathcal{F}$ there is some $j\in \{1,...,N\}$ such that $\|f-f_j\|_{\infty}\leq\delta.$ The number $\log_2\mathcal{N}(\delta,\mathcal{F},\|\cdot\|_\infty)$ is then called a $\delta$-entropy of the set  $\mathcal{F}$. To bound the entropy of $\mathcal{N}_{\sigma}(P,R)$ we will need the following lemma about the Lipschitz continuity of the activation function $\sigma$.
\begin{lemma}
On the interval $[-L,L], L\in2\mathbb{N}_0,$ the function $\sigma$ is Lipschitz continuous with Lipschitz constant bounded by $L+1$.
\end{lemma}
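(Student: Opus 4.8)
The plan is to reduce the claim to the pointwise bound $|\sigma'(x)|\le L+1$. On each interval $[k-1,k+1]$ with $k\in2\mathbb{N}_0$ the function $\sigma$ is the product of a monomial and a ReLU-type factor, hence it is piecewise $C^1$ with only finitely many breakpoints inside $[-L,L]$; together with the continuity of $\sigma$ on $\mathbb{R}$ already observed in the text, this implies that the Lipschitz constant of $\sigma$ on $[-L,L]$ equals the supremum of $|\sigma'|$ over the points of differentiability. Since $\sigma\equiv0$ on $(-\infty,-1]$, and since the parity of $L$ forces every piece $[k-1,k+1]$ that meets $[-L,L]$ to have $k\in2\mathbb{N}_0$ with $k\le L$ (the case $L=0$ being trivial), it is enough to bound $|\sigma'(x)|$ on each such piece.

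Fix such a $k$ and substitute $t=x-k\in[-1,1]$, so $\sigma(x)=h_k(t):=t^{a_k}(1-b_k|t|)_+$. I would first record two arithmetic facts: from the definitions $a_k+b_k=\tfrac{(m_k+1)(m_k+2)}{2}-\tfrac{m_k(m_k+1)}{2}=m_k+1$, and since $(m_k+1)(m_k+2)$ is even and strictly greater than the even integer $k$ one has $k\le(m_k+1)(m_k+2)-2$, so that $b_k\ge1$ and $a_k\ge0$. In particular $h_k$ vanishes for $|t|\ge1/b_k$, and $|h_k(t)|=|t|^{a_k}(1-b_k|t|)_+$ is even in $t$; hence it is enough to bound $|h_k'|$ on $[0,1/b_k]$, where $h_k(t)=t^{a_k}(1-b_kt)$ and $h_k'(t)=t^{a_k-1}\bigl(a_k-(a_k+1)b_kt\bigr)$.

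Next I would split into cases. If $a_k=0$ then $h_k$ is the tent map $t\mapsto(1-b_k|t|)_+$, which is Lipschitz with constant $b_k$. If $a_k\ge1$ I would show $\max_{[0,1/b_k]}|h_k'|\le1$ by an elementary extremum analysis: on $[0,\tfrac{a_k}{(a_k+1)b_k}]$ the quantity $t^{a_k-1}\bigl(a_k-(a_k+1)b_kt\bigr)$ is nonnegative and attains its maximum either at the endpoint $t=0$ (value $\le1$) or at the stationary point $t^{*}=\tfrac{a_k-1}{(a_k+1)b_k}$, where it equals $(t^{*})^{a_k-1}\le1$ because $t^{*}\le1/b_k\le1$; on the remaining interval $[\tfrac{a_k}{(a_k+1)b_k},1/b_k]$ the magnitude $t^{a_k-1}\bigl((a_k+1)b_kt-a_k\bigr)$ is nondecreasing and equals $b_k^{\,1-a_k}\le1$ at $t=1/b_k$. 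Thus $|\sigma'|\le\max\{1,b_k\}$ on the piece indexed by $k$.

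Finally, $a_k\ge0$ and $a_k+b_k=m_k+1$ give $b_k\le m_k+1$, while $m_k(m_k+1)\le k\le L$ gives $m_k\le L$; hence $b_k\le L+1$. Combining with the previous paragraph, $|\sigma'(x)|\le\max\{1,L+1\}=L+1$ on every piece meeting $[-L,L]$, which is the desired Lipschitz bound. The one genuinely computational step is the case $a_k\ge1$, i.e. checking that the derivative of $t\mapsto t^{a_k}(1-b_kt)$ stays in $[-1,1]$ on $[0,1/b_k]$ uniformly in $a_k\ge1$ and $b_k\ge1$; everything else is bookkeeping with the integers $a_k$, $b_k$, $m_k$.
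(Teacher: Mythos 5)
Your proof is correct, but the key step is handled differently from the paper. Both arguments localize to the pieces $[k-1,k+1]$ and both close with the same arithmetic ($m_k\le k\le L$, hence a bound of $L+1$), but where you compute $\sigma'$ explicitly and run an extremum analysis on $t^{a_k-1}\bigl(a_k-(a_k+1)b_kt\bigr)$, the paper avoids calculus entirely: it observes that on $[k-1,k+1]$ the two factors $(x-k)^{a_k}$ and $(1-b_k|x-k|)_+$ are each bounded by $1$ and Lipschitz with constants $a_k$ and $b_k$ respectively, and invokes the elementary product rule $\mathrm{Lip}(fg)\le\|f\|_\infty\,\mathrm{Lip}(g)+\|g\|_\infty\,\mathrm{Lip}(f)$ to get the per-piece constant $a_k+b_k=m_k+1\le k+1$. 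Your route is longer and requires a bit of care at the breakpoints ($t=0$, $t=\pm1/b_k$, and the convention $0^0=1$ when $a_k=1$), but it buys a strictly sharper per-piece bound, $\max\{1,b_k\}$ instead of $a_k+b_k$: it shows the monomial factor contributes essentially nothing to the Lipschitz constant, only the tent factor does. Since the lemma only needs $L+1$ and the paper's soft argument already delivers that in three lines, the extra precision is not exploited downstream (Lemma \ref{ent} only uses the $O(R)$ growth of the constant), but your version would matter if one wanted the covering-number bound to depend on $b_k$ alone.
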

\begin{proof}
Note that for each $k\leq L, k\in2\mathbb{N}_0,$ on the interval $[k-1, k+1]$ the functions $(x-k)^{a_k}$ and $(1-b_k|x-k|)_+$ are bounded by 1 and are Lipschitz continuous with Lipschitz constants $a_k$ and $b_k$, respectively. Therefore, on $[k-1, k+1]$ the function $\sigma$ is Lipschitz continuous with Lipschitz constant $a_k+b_k=m_k+1\leq k+1.$ Hence, the Lipschitz constant of $\sigma$ on $[-L, L]$ is bounded by $L+1$. 
\end{proof}
\begin{lemma}\label{ent} For any $\delta\in(0,1]$,
$$\log_2\mathcal{N}(\delta,\mathcal{N}_{\sigma}(P,R),\|\cdot\|_\infty)\leq3P\log_2(16P(R+1)^3/\delta).$$
\end{lemma}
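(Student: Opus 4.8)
The plan is to build an explicit $\delta$-net for $\mathcal{N}_\sigma(P,R)$ by discretising its parameter space. A network $g\in\mathcal{N}_\sigma(P,R)$ is specified by the $3P$ real numbers $U_j,V_j,W_j\in[-R,R]$, $j=1,\dots,P$. First I would fix a mesh $\epsilon>0$, round each of these $3P$ parameters to the nearest point of an $\epsilon$-grid contained in $[-R,R]$ (which can be taken to have at most $2R/\epsilon+2$ points), and let $\mathcal{G}$ be the finite set of networks whose parameters are all on the grid, so that $|\mathcal{G}|\le(2R/\epsilon+2)^{3P}$. It then remains to choose $\epsilon$ so that each $g\in\mathcal{N}_\sigma(P,R)$ is within $\delta$ in $\|\cdot\|_\infty$ of its rounded version $g'\in\mathcal{G}$, and to simplify the count.

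Two elementary facts about $\sigma$ make the rounding controllable. First, $|\sigma(x)|\le1$ for every $x\in\R$: on each piece $[k-1,k+1]$ one has $|(x-k)^{a_k}|\le1$ and $0\le(1-b_k|x-k|)_+\le1$. Second, for $x\in[0,1]$ and parameters in $[-R,R]$ each preactivation $V_j+W_jx$ lies in $[-2R,2R]$; letting $L$ be the least even integer with $L\ge2R$, the preceding Lipschitz lemma gives that $\sigma$ is Lipschitz on $[-L,L]$ with constant at most $L+1\le2R+3\le3(R+1)$, and the preactivations of both $g$ and $g'$ stay in $[-L,L]$.

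With these, for parameter tuples at $\ell^\infty$-distance at most $\epsilon$ and any $x\in[0,1]$ I would estimate neuron by neuron:
\begin{align*}
\bigl|U_j\sigma(V_j+W_jx)-U_j'\sigma(V_j'+W_j'x)\bigr| &\le |U_j-U_j'|\,|\sigma(V_j+W_jx)|+|U_j'|\,\bigl|\sigma(V_j+W_jx)-\sigma(V_j'+W_j'x)\bigr| \\
&\le \epsilon+3R(R+1)\bigl(|V_j-V_j'|+|W_j-W_j'|\bigr)\le\epsilon+6R(R+1)\epsilon,
\end{align*}
and then sum over the $P$ neurons, using $1+6R(R+1)\le7(R+1)^2$, to obtain $\|g-g'\|_\infty\le7P(R+1)^2\epsilon$. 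Choosing $\epsilon=\delta/\bigl(7P(R+1)^2\bigr)$ therefore makes $\mathcal{G}$ a $\delta$-net. Substituting this value, $2R/\epsilon+2=14PR(R+1)^2/\delta+2\le16P(R+1)^3/\delta$ (using $\delta\le1$ and $P\ge1$, so $2\le2P(R+1)^3/\delta$), whence $\mathcal{N}(\delta,\mathcal{N}_\sigma(P,R),\|\cdot\|_\infty)\le|\mathcal{G}|\le\bigl(16P(R+1)^3/\delta\bigr)^{3P}$; taking $\log_2$ gives the lemma.

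No step here is genuinely hard; the work is all bookkeeping. The one point to be careful about is keeping all preactivations inside a single interval $[-L,L]$ with $L$ even, so that the earlier Lipschitz lemma applies verbatim; after that it is only a matter of tracking the constants so that the additive $+2$ from the grid count and the factor $R/(R+1)\le1$ are absorbed, leaving exactly $16P(R+1)^3/\delta$ inside the logarithm and exponent $3P$ outside.
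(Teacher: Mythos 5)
Your proposal is correct and follows essentially the same route as the paper's proof: discretize the $3P$ parameters on an $\epsilon$-grid, use $|\sigma|\le 1$ together with the Lipschitz lemma on an interval containing all preactivations $V_j+W_jx\in[-2R,2R]$ to bound the perturbation by a multiple of $P(R+1)^2\epsilon$, and then solve for $\epsilon$. Your intermediate constant ($7P(R+1)^2$ versus the paper's $4P(R+1)^2$) and your slightly more careful choice of an even endpoint $L$ for the Lipschitz lemma are immaterial, since both land on the stated bound $3P\log_2(16P(R+1)^3/\delta)$.
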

\begin{proof}
Take any $\varepsilon>0$ and consider the set $\mathcal{N}(\varepsilon)$ of networks from $\mathcal{N}_{\sigma}(P,R)$ with weights and biases being of the form $m\varepsilon$, $m\in[- R/\varepsilon, R/\varepsilon]\cap\mathbb{Z}.$ Clearly, the cardinality of the set $\mathcal{N}(\varepsilon)$ is bounded by $(2R/\varepsilon+1)^{3P}.$ Take any $g(x)=U\sigma(V+Wx)\in\mathcal{N}_{\sigma}(P,R)$ and let $g_1(x)=U_1\sigma(V_1+W_1x)\in\mathcal{N}(\varepsilon)$ be such that $|U-U_1|_\infty\leq\varepsilon, |V-V_1|_\infty\leq\varepsilon$ and $|W-W_1|_\infty\leq\varepsilon,$ where $|\cdot|_\infty$ denotes the $l_\infty$ norm of $P$-dimensional vectors. As $V, W, V_\varepsilon, W_\varepsilon\in[-R, R]^P,$ then $V+Wx\in[-2R, 2R]^P$ and $V_\varepsilon+W_\varepsilon x\in[-2R, 2R]^P$ for all $x\in[0,1]$. By previous lemma we have that on $[-2R, 2R]$ the function $\sigma$ is Lipschitz continuous with Lipschitz constant bounded by $2R+3$. Hence, denoting by $|\cdot|_1$ the $l_1$ norm of $P$-dimensional vectors, we get that for all $x\in[0,1]$
\begin{align*}
&|U\sigma(V+Wx)-U_\varepsilon\sigma(V_\varepsilon+W_\varepsilon x)|\leq\\
& |U\sigma(V+Wx)-U\sigma(V_\varepsilon+W_\varepsilon x)|+|U\sigma(V_\varepsilon+W_\varepsilon x)-U_\varepsilon\sigma(V_\varepsilon+W_\varepsilon x)|\leq\\
&|U|_1|\sigma(V+Wx)-\sigma(V_\varepsilon+W_\varepsilon x)|_\infty+|U-U_\varepsilon|_\infty|\sigma(V_\varepsilon+W_\varepsilon x)|_1\leq\\
&PR(2R+3)|(V+Wx)-(V_\varepsilon+W_\varepsilon x)|_\infty+\varepsilon P\leq 2\varepsilon PR(2R+3)+\varepsilon P\leq 4\varepsilon P(R+1)^2.
\end{align*}
Taking $\varepsilon=\delta/(4P(R+1)^2)$ we get that the set $\mathcal{N}(\delta/(4P(R+1)^2))$ of cardinality bounded by $(16P(R+1)^3/\delta)^{3P}$ forms a $\delta$-covering of $\mathcal{N}_{\sigma}(P,R)$.
\end{proof}
 Let now $f_0\in\mathcal{C}^\beta(K)$ be an unknown regression function to be recovered from $n$ observed iid pairs $(X_i, Y_i)$, $i=1,...,n,$ following a regression model 
$$Y_i=f_0(X_i)+\epsilon_i,$$
where the standard normal noise variables $\epsilon_i$ are assumed to be independent of $X_i$. Denoting 
\begin{equation}\label{bound}
\mathcal{N}_{\sigma}(P,R,K):=\{g\in\mathcal{N}_{\sigma}(P,R)\;\; | \;\; \|g\|_\infty\leq2K\},
\end{equation}
we get
\begin{theorem}\label{thm2}
Let $M_n:=\lceil n^{\frac{1}{2\beta+1}}\rceil,$ $P_n:=(M_n+1)\lceil\beta\rceil$ and $R_n:=K+(M_n+\lceil\beta\rceil+1)^2$. Then, the prediction error of empirical risk minimizer 
$$\hat{f}_n\in\argmin_{f\in{\mathcal{N}_{\sigma}(P_n, R_n, K)}}\sum_{i=1}^{n}(Y_i-f(X_i))^2$$
is bounded by 
$$\EX_{f_0}[(\hat{f}_n(X)-f_0(X))^2]\leq  cn^{\frac{-2\beta}{2\beta+1}}\log_2n,$$
where $c=c(\beta, K)$ is some constant.
\end{theorem}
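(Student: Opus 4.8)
The plan is to assemble Theorem~\ref{thm2} from three facts already in hand: the exact SNN representation of the localized Taylor polynomial (Lemma~\ref{l1}), its sup-norm approximation rate \eqref{a}, and the entropy bound (Lemma~\ref{ent}); these are then combined through a standard oracle inequality for least-squares estimators over a uniformly bounded function class in the Gaussian regression model.

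First I would pin down the approximating network. Applying Lemma~\ref{l1} with $M=M_n$, the localized Taylor polynomial $g_0:=P_{M_n}^\beta f_0$ is represented exactly by some $g_0\in\mathcal{N}_\sigma(P_n,R_n)$ with $P_n=(M_n+1)\lceil\beta\rceil$ and $R_n=K+(M_n+\lceil\beta\rceil+1)^2$. To see that $g_0$ in fact lies in the truncated class $\mathcal{N}_\sigma(P_n,R_n,K)$ of \eqref{bound}, note that $\|f_0\|_\infty\leq K$ because $f_0\in\mathcal{C}^\beta(K)$, and by \eqref{a}, $\|g_0-f_0\|_\infty\leq KM_n^{-\beta}\leq K$, so $\|g_0\|_\infty\leq 2K$. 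This also records the approximation bound $\|g_0-f_0\|_\infty^2\leq K^2M_n^{-2\beta}\leq K^2 n^{-2\beta/(2\beta+1)}$, using $M_n\geq n^{1/(2\beta+1)}$.

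Next I would bound the complexity of the hypothesis set. Since $\mathcal{N}_\sigma(P_n,R_n,K)\subseteq\mathcal{N}_\sigma(P_n,R_n)$, Lemma~\ref{ent} gives, for $\delta\in(0,1]$, that $\log_2\mathcal{N}(\delta,\mathcal{N}_\sigma(P_n,R_n,K),\|\cdot\|_\infty)\leq 3P_n\log_2(16P_n(R_n+1)^3/\delta)$. With $M_n=\lceil n^{1/(2\beta+1)}\rceil$ one has $P_n\leq c_1 n^{1/(2\beta+1)}$ and $R_n\leq c_2 n^{2/(2\beta+1)}$ for constants $c_1,c_2$ depending only on $\beta,K$; choosing $\delta=1/n$ makes the logarithm $O(\log_2 n)$, so $\log_2\mathcal{N}(1/n,\mathcal{N}_\sigma(P_n,R_n,K),\|\cdot\|_\infty)\leq c_3 n^{1/(2\beta+1)}\log_2 n$. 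Finally I would invoke a standard oracle inequality for the ERM $\hat f_n$ over a class of functions uniformly bounded by $2K$ with i.i.d.\ standard Gaussian noise (of the form $\EX_{f_0}[(\hat f_n(X)-f_0(X))^2]\leq C(\inf_{f}\|f-f_0\|_\infty^2 + n^{-1}\log_2\mathcal{N}(1/n,\cdot) + n^{-1})$, with $C=C(K)$; see, e.g., \cite{SH1}). Taking $f=g_0$ in the infimum and inserting the two displayed bounds, the approximation term is $\leq K^2 n^{-2\beta/(2\beta+1)}$, the entropy term is $\leq C c_3 n^{-2\beta/(2\beta+1)}\log_2 n$, and the $n^{-1}$ term is negligible, which yields $\EX_{f_0}[(\hat f_n(X)-f_0(X))^2]\leq c\, n^{-2\beta/(2\beta+1)}\log_2 n$ with $c=c(\beta,K)$.

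The content is essentially bookkeeping, and the one place to be careful is the interface between the ingredients: one must verify that the exactly-representing network $g_0$ genuinely belongs to the \emph{truncated} class (so that the oracle inequality, which requires a uniformly bounded hypothesis set, is applicable), and one must match the covering scale $\delta=1/n$ in Lemma~\ref{ent} with the scale demanded by the oracle inequality so that only a single factor $\log_2 n$ appears rather than a polynomial loss. The genuinely analytic work — the exact SNN representation and the Lipschitz/entropy estimate — is already contained in Lemmas~\ref{l1} and \ref{ent}.
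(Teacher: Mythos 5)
Your proposal is correct and follows essentially the same route as the paper: combine the representation of $P_{M_n}^\beta f_0$ from Lemma~\ref{l1} with the bound \eqref{a} for the approximation term, use Lemma~\ref{ent} for the entropy term, and plug both into the oracle inequality (the paper cites Lemma~4 of \cite{SH} and takes $\delta=n^{-2\beta/(2\beta+1)}$ rather than your $\delta=1/n$, but either choice yields the same rate). Your explicit check that $\|P_{M_n}^\beta f_0\|_\infty\leq 2K$, so that the approximant lies in the truncated class $\mathcal{N}_\sigma(P_n,R_n,K)$, is a detail the paper leaves implicit and is worth keeping.
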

 \begin{proof}
From \eqref{a} and Lemma \ref{l1}, we have 
\begin{equation}\label{inf}
\inf\limits_{f\in{\mathcal{N}_{\sigma}(P_n, R_n, K)}}\EX[(f(X)-f_0(X))^2]\leq Kn^{\frac{-2\beta}{2\beta+1}}.
\end{equation}
Also, from Lemma \ref{ent} we have that 
\begin{equation}\label{ent1}
\log_2\mathcal{N}(\delta,\mathcal{N}_{\sigma}(P_n, R_n, K),\|\cdot\|_\infty)\leq Cn^{\frac{1}{2\beta+1}}\log_2(n/\delta),
\end{equation}
for some constant $C=C(\beta, K)$.

From \cite{SH}, Lemma 4, it follows that for any $\delta\in(0,1]$ 
\begin{equation}\label{oracle}
\begin{split}
&\EX_{f_0}[(\hat{f}_n(X)-f_0(X))^2]\leq\\
& 4\bigg[\inf\limits_{f\in{\mathcal{N}_{\sigma}(P_n, R_n, K)}}\EX[(f(X)-f_0(X))^2]+4K^2\frac{18\log_2\mathcal{N}(\delta,\mathcal{N}_{\sigma}(P_n, R_n),\|\cdot\|_\infty)+72}{n}+64\delta K\bigg].
\end{split}
\end{equation}
Thus, applying \eqref{inf} and \eqref{ent1} and taking $\delta=n^{\frac{-2\beta}{2\beta+1}}$ in  \eqref{oracle} proves the theorem.
 \end{proof}
 Note that as the definitions of the functions $P_M^\beta f$ and $\sigma$ suggest, in Lemma \ref{l1} we can consider only the networks for which the weight vector $W$ is the all-ones vector: $W=\textbf{1}:=(1,1,...,1)$. Also, there is a vector $V=V_{\beta,M}$ such that for all $x\in[0,1],$ the coordinates of $\sigma(V_{\beta,M}+\textbf{1}x)$ are the values $(x-\ell/M)^r(1-M|x-\ell/M|)_+,$ $0\leq\ell\leq M, 0\leq r<\beta.$ Hence, the coordinates of the output layer $U$, which becomes the only layer determined by $f$, are of the form $f^{(r)}(a)/r!\in[-K,K].$ As the function $\sigma$ is bounded by $1$ and for each $x\in[0,1]$ there are at most $2$ values of $\ell=0,...,M,$ for which $(1-M|x-\ell/M|)_+\neq 0,$ then the networks from the sets $\{U\sigma(V_{\beta,M}+\textbf{1}x);\; |U|_\infty\leq K\}$ are bounded by $2\lceil\beta\rceil K$. Hence, considering only the networks from those  sets, we can derive the same rate of convergence as in Theorem \ref{thm2}, and in this case the condition of boundedness of networks imposed in \eqref{bound} can be omitted.

\end{document}